\newtheorem{remark}{Remark}
\newtheorem{theorem}{Theorem}
\title{FS-IQA: Certified Feature Smoothing for Robust Image Quality Assessment}
\author{
    Ekaterina Shumitskaya \textsuperscript{\rm 1,2,3},
    Dmitriy Vatolin  \textsuperscript{\rm 1,2,3}, 
    Anastasia Antsiferova  \textsuperscript{\rm 2,1,4}
}
\begin{document}

\maketitle

\begin{abstract}
We propose a novel certified defense method for Image Quality Assessment (IQA) models based on randomized smoothing with noise applied in the feature space rather than the input space. Unlike prior approaches that inject Gaussian noise directly into input images, often degrading visual quality, our method preserves image fidelity while providing robustness guarantees. To formally connect noise levels in the feature space with corresponding input-space perturbations, we analyze the maximum singular value of the backbone network’s Jacobian. Our approach supports both full-reference (FR) and no-reference (NR) IQA models without requiring any architectural modifications, suitable for various scenarios. It is also computationally efficient, requiring a single backbone forward pass per image. Compared to previous methods, it reduces inference time by \textbf{99.5\%} without certification and by \textbf{20.6\%} when certification is applied. We validate our method with extensive experiments on two benchmark datasets, involving six widely-used FR and NR IQA models and comparisons against five state-of-the-art certified defenses. Our results demonstrate consistent improvements in correlation with subjective quality scores by up to \textbf{30.9\%}. Code is publicly available at \textit{link is hidden for a blind review}.
\end{abstract}


\section{Introduction}

\begin{figure}[htb]
\begin{center}
\centerline{\includegraphics[width=\linewidth]{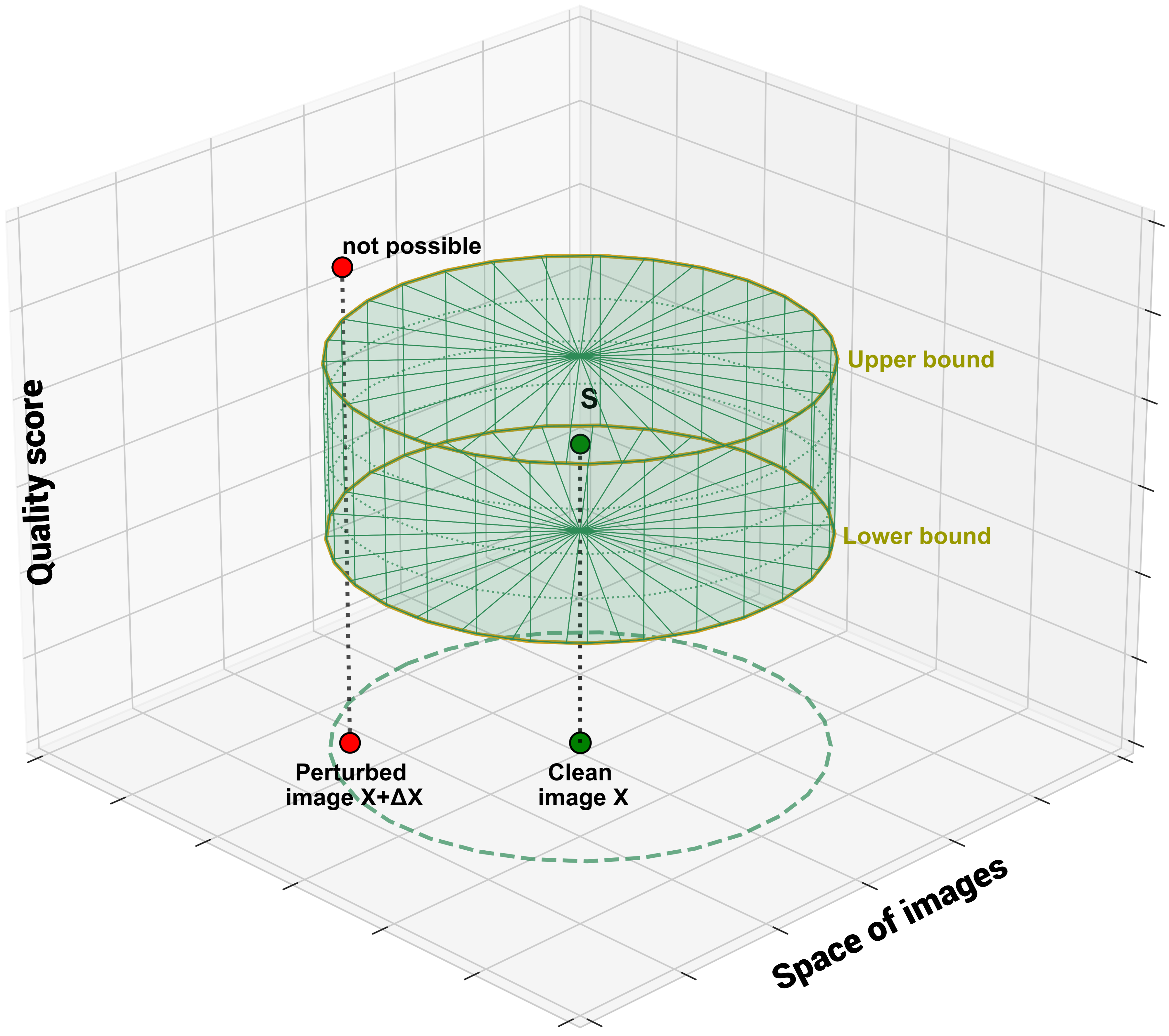}}
\caption{Visualization of a multidimensional cylinder bounding predicted quality score variations under input-space adversarial perturbations. The volume of the cylinder scales with feature-space noise level.}
\label{fig:intro-image}
\end{center}
\end{figure}

Image Quality Assessment (IQA) plays a crucial role in numerous applications, from image processing algorithms development to medical imaging and video streaming. Accurate and robust IQA models are essential to reliably measure the perceptual image quality under varying conditions. However, recent studies have shown that IQA models are vulnerable to adversarial perturbations \cite{zhang2022perceptual, yang2024exploring, antsiferova2024comparing}, which can lead to inaccurate quality scores and compromise their trustworthiness \cite{deng2024sparse, yu2025backdoor, visapp25}. 

To address this challenge, certified defenses for IQA have emerged as a promising direction, providing robustness guarantees by either constraining the model architecture \cite{ghazanfari2023lipsim} or injecting Gaussian noise at the input image level \cite{SHUMITSKAYA2025104447}. Still, these solutions aren’t perfect. Architectural constraints often degrade model accuracy, reducing the correlation between predicted quality scores and true subjective scores. Meanwhile, input-space noise augmentation like randomized smoothing or median smoothing \cite{cohen2019certified, chiang2020detection}, though theoretically effective, distorts the original image content. These distortions suppress or affect important visual feature — like fine textures, edges, or compression artifacts — which play a fundamental role in assessing perceptual quality. For instance, if subtle compression artifacts that degrade image quality are smoothed out or masked by added noise, the model may fail to detect these degradations properly, resulting in lower correlation between predicted quality scores and actual human subjective assessments. This trade-off between robustness and accuracy limits the practical usage of such defenses.

To overcome this limitation, we propose to shift the smoothing operation from the input space to a more semantically meaningful feature space. Unlike input-space noise, smoothing features instead of raw images can help preserve critical quality-related information while still providing robustness guarantees. Feature smoothing has been explored in prior work, primarily for developing defenses in classification tasks \cite{addepalli2021boosting, ma2023adversarial}. However, these approaches are empirical and do not establish a formal relationship between noise levels in the feature space and those in the input space. Our research aims to fill this gap by proposing a certified defense applied in the feature space specifically for the IQA task.

The main purpose of our approach is to identify a theoretical multidimensional cylinder that tightly bounds the potential variations of predicted quality scores under adversarial perturbations in the image space (see Figure \ref{fig:intro-image} for visualization). By adjusting the noise level in the feature space, we control the volume of this cylinder — the higher the noise level, the larger the volume. 

Our method decomposes the IQA model into two components: a backbone and a scorer module. This design eliminates the need to retrain the backbone network; only the scorer requires fine-tuning. The backbone processes the input image to produce a feature representation, where randomized smoothing is applied. Subsequently, the scorer module generates the final quality score along with robustness certificates. To formally relate the feature space noise to the input perturbation magnitude, we analyze the maximum singular value of the backbone’s Jacobian matrix. More precisely, for a given image, our method outputs a tuple \((S, R, S^l, S^u)\), where \(S\) denotes the predicted quality score, and \(R\) represents the input-space radius within which the output of the defended IQA model is guaranteed to lie between \(S^l\) (lower bound) and \(S^u\) (upper bound). This corresponds to a multidimensional cylinder that bounds quality score variations under allowable input perturbations.

The primary contributions of this work can be summarized as follows:

\begin{itemize}
    \item To the best of our knowledge, this work presents the first certified defense for IQA, that operates in the feature space instead of the input space, preserving image quality while providing certified robustness. Our method supports both full-reference (FR) and no-reference (NR) IQA models without requiring any architectural modifications.
    \item We theoretically connect feature-space noise levels to input-space perturbations by analyzing the maximum singular value of the backbone’s Jacobian. 
    \item We conduct extensive experiments on \textbf{two datasets} using \textbf{six} widespread FR and NR \textbf{IQA models}, comparing our approach against \textbf{five} existing state-of-the-art methods. Our method consistently improves the correlation with subjective quality scores by up to \textbf{30.9\%}.
    \item Our method is computationally efficient, requiring only one backbone forward pass per image. Compared to existing techniques, it reduces inference time by \textbf{99.5\%} without certification and by \textbf{20.6\%} with certification.
    \item We analyze the method's usability beyond certified guarantees and demonstrate that it suppresses adversarial gain, improving empirical robustness by \textbf{69.9\%}.
\end{itemize}

Our code is publicly available at \textit{link is hidden for a blind review}.

\begin{figure*}[htb]
\begin{center}
\centerline{\includegraphics[width=\linewidth]{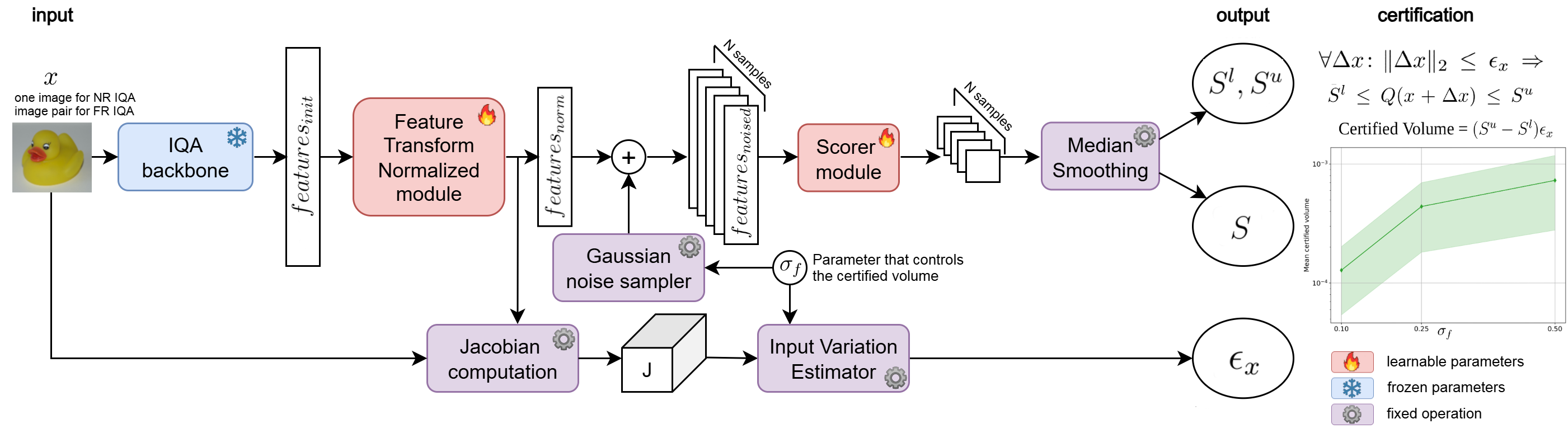}}
\caption{Overview of the proposed FS-IQA defense. The input image or image pairs are first processed by the IQA backbone and the FTN module. Gaussian noise is then added in the feature space. Using Median Smoothing theory \cite{chiang2020detection}, we derive the final quality score $S$ along with its lower and upper certified bounds $(S^l, S^u)$. Finally, the proposed Input Variation estimator is used to determine the corresponding input perturbation constraints $\epsilon_x$ for the given feature noise level $\sigma_f$.}
\label{fig:method_overview}
\end{center}
\end{figure*}

\section{Related Work}

\subsection{Preliminaries. Randomized Smoothing}
Randomized Smoothing \textbf{\textit{(RS)}} \cite{cohen2019certified} is a widely used certification technique that provides provable robustness guarantees for large-scale models, requiring only black-box access to model evaluations. Originally introduced for classification tasks, RS constructs a smoothed classifier $g$ from a base classifier $f$ by averaging predictions over Gaussian noise perturbations $e$ added to the input $x$:  

\[
    g(x) = \mathbb{E} \{ f(x + e) \}, e \sim \mathcal{N}(0, \sigma^2I).
\]  

This approach relies on the robustness of $f$ under Gaussian noise to certify that $g$ is resistant to adversarial perturbations within an $l_2$-norm ball of radius $\epsilon$: 
\[
\epsilon = \frac{\sigma}{2} \bigl(\Phi^{-1}(pA) - \Phi^{-1}(pB)\bigr),
\]  
where \( pA \) and \( pB \) are lower and upper confidence bounds on the top and second-top class probabilities, respectively, and \( \Phi^{-1} \) denotes the inverse Gaussian cumulative distribution function. 

\subsection{Randomized Smoothing for regression}

In this section, we review existing approaches for defending IQA models as well as regression models in general. Since regression models produce continuous outputs for given inputs, developing certified defenses for them is fundamentally more difficult compared to discrete tasks such as classification.

\textbf{\textit{RS-Reg \cite{rekavandi2025rs}.}} In this paper, the authors generalize randomized smoothing for application to regression models. They define a smoothed function as $g(x) = mean \{f(x+e) \}, e \sim \mathcal{N}(0, \sigma^2I) $, where $x$ is an input image, $f$ is the base regression model and $e$ is Gaussian noise sampled from a multivariate normal distribution with covariance matrix $\sigma^2 I$. Using this definition, the authors further derive a probabilistic certified upper bound on input perturbations for the base regression model, assuming the outputs are bounded.  

\textbf{\textit{Cert-Reg \cite{rekavandi2024certified}.}} The authors of this paper extend randomized smoothing to regression models using powerful tools from robust statistics, specifically using $\alpha$-trimming filter as the smoothing function: $g(x) = \frac{1}{N - 2 [\alpha N]} \sum_{i = [\alpha N]+1}^{N-[\alpha N]} f(x+e), e \sim \mathcal{N}(0, \sigma^2I)$. Here, the notation is consistent with the previous paper, the values $f(x+e)$ are assumed to be sorted, $N$ denotes the total number of noise samples, and $\alpha \in [0,0.5)$ is the trimming parameter. In other words, this operator removes the lowest $\alpha$-fraction and the highest $\alpha$-fraction of outputs and computes the average of the remaining ones. This approach improves robustness by reducing sensitivity to outliers. Moreover, the authors derive a certification radius for the input perturbation under the assumption that the model outputs are bounded.

\textbf{\textit{MS, DMS \cite{chiang2020detection}}.} In this paper, the authors propose a novel defense approach for regression models using the median as the smoothing function:  
$g(x) = \mathrm{median} \{ f(x + e) \}, \quad e \sim \mathcal{N}(0, \sigma^2 I).$  
This operator is significantly more robust to outliers compared to both averaging and the $\alpha$-trimming filter. To provide certification, the authors prove theorems imposing restrictions on the model’s output within a fixed \(\ell2\)-norm ball around the input. Furthermore, they address the potential loss in model performance caused by the added noise and propose an extension of their method that includes an additional denoising step applied after adding noise but before evaluating the regression model's output. In this paper, we refer to this enhanced method as Denoised Median Smoothing (DMS), while the original approach is called Median Smoothing (MS).

\textbf{\textit{DMS-IQA \cite{SHUMITSKAYA2025104447}}.} In this paper, the authors extend the concept of Median Smoothing specifically to defend IQA models. They propose a novel denoiser training scheme based on a composite loss function consisting of three components: pixel-wise MSE, MSE between predicted and true subjective scores, and a differentiable rank loss. Their experimental results demonstrate that this composite loss function enables the training of better denoiser for use within the Median Smoothing framework.

\subsection{Defenses for IQA}

Various empirical defense methods for IQA models have been proposed, including image purification \cite{gushchin2024guardians, liu2025enhancing} and adversarial training \cite{liu2024defense, chistyakova2024increasing}. However, research on certified defenses specifically designed for IQA remains in its early stages, with only a few existing works addressing this challenge. For example, LipSim \cite{ghazanfari2023lipsim} proposes using Lipschitz networks to certify a FR IQA model. While this provides robustness guarantees, it is not a defense mechanism in itself, but rather a single robust FR IQA architecture. As a result, this approach lacks scalability to NR IQA models and other FR IQA architectures. Another relevant method, DMS-IQA \cite{SHUMITSKAYA2025104447}, applies randomized smoothing in the image space as a defense for IQA models. However, this approach perturbs the images themselves, which degrades image quality and affect the accuracy of quality assessment.

To address these challenges, we introduce a certified defense based on feature-space smoothing, which preserves image fidelity while delivering robust and scalable defense applicable to both FR and NR IQA models.


\section{Proposed Method}

\subsection{Notation and Problem Statement}

First, we introduce the notation. We define the defended model \( Q(\cdot) \), which maps an input image (or image pairs) to a tuple of outputs:  
\[
Q : x \in \mathbb{R}^{3 \times H \times W} \to (S, \epsilon_x,  S^l, S^u),
\]
where \( x \) is an input image for NR IQA and an image pair for FR IQA, \( S \) is the predicted quality score of \( x \), \( S^l \) and \( S^u \) the lower and upper bounds on the quality score under allowable perturbations, and \( \epsilon_x \) the maximum perturbation magnitude of \( x \). 

Our objective is to guarantee that for any input image perturbation \( \Delta x \) satisfying  $||\Delta x||_2 \leq \epsilon_x$, the predicted quality score remains within the bounds:  
\[
S^l_{Q(x)} \leq S_{Q(x + \Delta x)} \leq S^u_{Q(x)}.
\]

Note that for FR IQA, the perturbation \( \Delta x \) refers to perturbation of the distorted image only, while the reference image is considered fixed.

\subsection{Overview of the FS-IQA Architecture}

Figure \ref{fig:method_overview} provides an overview of the proposed FS-IQA method. The defended model \( Q \) is composed of three CNN-based modules: the IQA backbone \( b(\cdot) \), the Feature Transform Normalize Module \( FTN(\cdot) \), and the scorer module \( S(\cdot) \). Beyond the neural modules, FS-IQA includes two components that provide certified robustness guarantees: the Median Smoothing operator \cite{chiang2020detection} and the proposed Input Variation Estimator. These components will be described in detail in the following subsections.

\subsection{Feature Preparation Pipeline}

First, the input image \( x \) is processed by the IQA backbone \( b(\cdot) \), producing the initial features:  

\[
f_{init} = b(x)
\]

Next, these features are passed through the Feature Transform Normalize Module to obtain normalized features:  

\[
f_{norm} = FTN(f_{init})
\]

The primary objectives of the Feature Transform Normalize Module are the following: 

\begin{itemize}
    \item Dimensionality reduction — the module reduces the feature dimensionality to 512. This step is important for memory-efficient computation of the Jacobian matrix of the module’s output with respect to the input image. 
    \item Normalization — the module constrains the feature values within the \([0, 1]\) range. This normalization prepares the features for the subsequent consistent application of Gaussian noise.
\end{itemize}

Then, we generate \( N \) samples of Gaussian noise and add them to the normalized features to obtain smoothed features:  
\[
\{ f_{noised}^i = f_{norm} + e_i \mid e_i \sim \mathcal{N}(0, \sigma_f^2 I), \quad i = 1, \ldots, N \},
\]  
where \(\sigma_f\) denotes the noise standard deviation, \(I\) is the identity matrix, and \(N\) is the number of samples.

\subsection{Input Variation Estimator}
Next, given the feature noise standard deviation \(\sigma_f\), we use the proposed Input Variation Estimator module to determine the maximum allowable perturbation \(\epsilon_x\) on the input image in terms of its \(l_2\) norm, i.e.,  
\[
\epsilon_x = IVE(FTN \circ b, x, \sigma_f).
\]  
The value \(\epsilon_x\) quantifies how much the input image \(x\) can be perturbed without causing feature variations larger than \(\sigma_f\). To estimate \(\epsilon_x\), we rely on a linear approximation of the composite function $B = FTN \circ b$ around \(x\), formalized in the following theorem:

\begin{theorem}[Input Variation Estimator]
Let \( B: \mathbb{R}^n \to \mathbb{R}^m \) be a differentiable feature map at point \( x \), and let \( J_B(x) \in \mathbb{R}^{m \times n} \) be its Jacobian at \( x \). For any input perturbation \( u \in \mathbb{R}^n \) satisfying
\[
\|u\|_2 \leq \epsilon_x,
\]
the change in features satisfies
\[
\max_{\|u\|_2 \leq \epsilon_x} \|B(x + u) - B(x)\|_2 \leq \sigma_f.
\]
Using the first-order approximation \( B(x+u) \approx B(x) + J_B(x) u \), the maximal allowed input perturbation \( \epsilon_x \) is given by
\[
\epsilon_x = \frac{\sigma_f}{\|J_B(x)\|_2},
\]
where \(\|J_B(x)\|_2\) denotes the spectral norm (operator norm induced by the Euclidean norm) of the Jacobian matrix.
\end{theorem}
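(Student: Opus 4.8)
The plan is to reduce the feature-change bound to a standard operator-norm computation. First I would substitute the first-order Taylor approximation $B(x+u) \approx B(x) + J_B(x)u$ into the quantity to be controlled, so that $\|B(x+u) - B(x)\|_2 \approx \|J_B(x)u\|_2$. The problem then becomes: over the Euclidean ball $\{u : \|u\|_2 \le \epsilon_x\}$, how large can $\|J_B(x)u\|_2$ be, and for which radius $\epsilon_x$ does this maximum equal the prescribed feature-noise level $\sigma_f$?

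Next I would evaluate this maximum exactly. By definition of the spectral (operator) norm, $\|J_B(x)\|_2 = \sup_{\|v\|_2 = 1}\|J_B(x)v\|_2$, and by homogeneity of the linear map, $\max_{\|u\|_2 \le \epsilon_x}\|J_B(x)u\|_2 = \epsilon_x\,\|J_B(x)\|_2$. To make this an equality rather than merely an inequality, I would invoke the singular value decomposition: the supremum in the operator-norm definition is attained on the unit vector $v_1$ associated with the largest singular value $\sigma_{\max}(J_B(x)) = \|J_B(x)\|_2$, and since in finite dimensions the unit sphere is compact, the maximum over the ball is genuinely achieved at $u = \epsilon_x v_1$. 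This identifies $\max_{\|u\|_2 \le \epsilon_x}\|B(x+u)-B(x)\|_2 = \epsilon_x\,\|J_B(x)\|_2$ in the linearized regime.

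Finally I would impose the target constraint $\epsilon_x\,\|J_B(x)\|_2 = \sigma_f$ and solve for the radius, giving $\epsilon_x = \sigma_f / \|J_B(x)\|_2$. Any larger radius would make the worst-case feature change exceed $\sigma_f$ (witnessed by the direction $v_1$), so this value is indeed the maximal admissible perturbation, while every smaller $u$ satisfies the bound by the monotonicity of $\|J_B(x)u\|_2$ in $\|u\|_2$.

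I expect the only genuine subtlety — rather than a hard obstacle — to be the role of the linear approximation: the equality $\max\|B(x+u)-B(x)\|_2 = \epsilon_x\|J_B(x)\|_2$ holds exactly only for the linearized map, so the certified radius is first-order, and its fidelity to the true $B$ depends on the higher-order remainder near $x$ (which is controlled when $\sigma_f$, and hence $\epsilon_x$, is small). A secondary point I would state cleanly is that tightness requires the supremum defining the operator norm to be attained, which I would justify via compactness of the sphere (equivalently, via the SVD) rather than leaving the relation as a one-sided bound.
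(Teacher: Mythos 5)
Your proposal is correct and follows essentially the same route as the paper: linearize $B$ via the first-order Taylor expansion, reduce the feature-change bound to $\|J_B(x)u\|_2$, and invoke the spectral norm to obtain $\epsilon_x = \sigma_f/\|J_B(x)\|_2$. The one place you go further is in justifying \emph{maximality}: the paper only proves the one-sided sufficiency $\|J_B(x)\|_2\,\rho \le \sigma_f$ and then asserts that $\epsilon_x$ is the largest such $\rho$ by definition, whereas you explicitly exhibit the worst-case direction (the top right singular vector) to show the bound is attained, which is the missing half of the paper's own argument and a welcome tightening rather than a departure.
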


\begin{proof}
By definition, the maximal input perturbation \(\epsilon_x\) is the largest radius \(\rho\) such that for every vector \(u\) with \(\|u\|_2 \leq \rho\), the corresponding feature change does not exceed \(\sigma_f\):
\[
\epsilon_x = \underset{\rho \geq 0}{\max}\left\{ \forall u, \|u\|_2 \leq \rho \implies \|B(x + u) - B(x)\|_2 \leq \sigma_f \right\}.
\]

For sufficiently small perturbations \(u\), we can use the first-order Taylor expansion: $B(x+u) \approx B(x) + J_B(x) u,$
where \(J_B(x)\) is the Jacobian matrix at the point \(x\). This approximation implies that the change in features is approximately linear with respect to \(u\), so
\[
\|B(x+u) - B(x)\|_2 \approx \|J_B(x) u\|_2.
\]

\[
\|J_B(x) u\|_2 \leq \|J_B(x)\|_2 \|u\|_2.
\]

To ensure the feature change never exceeds \(\sigma_f\) for all \(u\) with \(\|u\|_2 \leq \rho\), it suffices to require
\[
\|J_B(x)\|_2 \rho \leq \sigma_f,
\]
which rearranges to
\[
\rho \leq \frac{\sigma_f}{\|J_B(x)\|_2}.
\]

Since \(\epsilon_x\) is defined as the maximal such \(\rho\), we get:
\[
\epsilon_x = \frac{\sigma_f}{\|J_B(x)\|_2}.
\]

\end{proof}

\begin{remark}
The spectral norm \(\|J_B(x)\|_2\) of the Jacobian matrix is equal to its largest singular value \(\sigma_{\max}(J_B(x))\). This equivalence is important: the largest singular value represents the greatest factor by which the feature map's linear approximation can amplify the input perturbation in any direction. Bounding the input perturbation via this norm therefore provides control over the worst-case behavior of the feature change.
\end{remark}

\subsection{Median Smoothing in the feature space}
To obtain the final quality score, as well as the lower and upper bounds on the model output, we use Median Smoothing theory \cite{chiang2020detection}. Final quality score calculates as the value of Median Smoothing operator, applied to the set of smoothed features:

\[
\overline{v} = \{ \mathrm{Scorer}(f_{\mathrm{noised}}^{i}) \mid i = 1, \ldots, N \}
\]

\begin{equation}
S = \mathrm{median}\left( \overline{v} \right)
\label{eq:med_sm}
\end{equation}

\[
(S^l, S^u) = MS_{\mathrm{Cert}}\left(\overline{v}, \sigma_f, N, \alpha \right)
\]

where \(\alpha\) is the confidence level, \(MS_{\mathrm{Cert}}(\cdot)\) is an operator that provides certified guarantees for Median Smoothing when \(\mathrm{Scorer}(\cdot)\) is a regression function, i.e., a mapping from \(\mathbb{R}^k\) to \(\mathbb{R}\). Formally, these guarantees follow from the theorem below:

\begin{theorem}
\cite{chiang2020detection} If $Scorer(\cdot)$ is a regression function from the space of features to the space of real numbers, $f_{norm}$ is normalized features, $G$ is the Median Smoothing operator in the form (\ref{eq:med_sm}), then for all $\|u\|_2 \le \varepsilon_f$:
\begin{equation}
S^l \leq G(S(f_{norm}+u)) \leq S^u.
\label{eq:theorem1}
\end{equation}
Here, $\underline{p} = \Phi(- \frac{\varepsilon_f}{\sigma_f})$, $\overline{p} = \Phi(\frac{\varepsilon_f}{\sigma_f}) $ and $\Phi(\cdot)$ is the Gaussian cumulative density function. $G$ is an operator of Median Smoothing, $S^l$ and $S^u$ are defined as the percentiles of the smoothed function:
\begin{align}
    &S^l = \text{sup}_{y \in \mathbb{R}}\{\mathbb{P}_{r \sim \mathcal{N}(0, \sigma_f^2 I)}[S(f_{norm}+r) \leq y] \leq \underline{p} \}, \nonumber \\ 
    &S^u = \text{inf}_{y \in \mathbb{R}} \{\mathbb{P}_{r \sim \mathcal{N}(0, \sigma_f^2 I)} [S(f_{norm}+r) \leq y] \geq \overline{p} \}.  \nonumber 
\end{align}
\end{theorem}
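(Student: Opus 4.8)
The plan is to reduce the two-sided bound to a statement about where the Gaussian-smoothed cumulative distribution function (CDF) of the scorer crosses $1/2$, and then to control how that crossing point moves under an input shift $u$. For a base point $z$ and threshold $y$, write $\Psi_z(y) = \mathbb{P}_{r \sim \mathcal{N}(0,\sigma_f^2 I)}[S(z+r) \le y]$ for the smoothed CDF centered at $z$. With this notation, $S^l$ and $S^u$ are, respectively, the $\underline{p}$- and $\overline{p}$-quantiles of $\Psi_{f_{norm}}$, and the quantity $G(S(f_{norm}+u))$ is precisely the value $m_u$ at which the shifted CDF satisfies $\Psi_{f_{norm}+u}(m_u) = \tfrac12$. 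The goal therefore becomes showing $S^l \le m_u \le S^u$ for every $\|u\|_2 \le \varepsilon_f$.

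The key tool I would invoke is the foundational Lipschitz property of Gaussian smoothing from randomized smoothing: for any fixed threshold $y$, the indicator $w \mapsto \mathbb{1}[S(w)\le y]$ is a $\{0,1\}$-valued function, and the map $z \mapsto \Phi^{-1}(\Psi_z(y))$ is $1/\sigma_f$-Lipschitz in the Euclidean norm. Granting this, for any $\|u\|_2 \le \varepsilon_f$ and any $y$ one obtains
\[
\bigl|\Phi^{-1}(\Psi_{f_{norm}+u}(y)) - \Phi^{-1}(\Psi_{f_{norm}}(y))\bigr| \le \frac{\|u\|_2}{\sigma_f} \le \frac{\varepsilon_f}{\sigma_f}.
\]

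I would then instantiate this bound at the two quantile thresholds. For the upper bound, at $y = S^u$ the definition gives $\Psi_{f_{norm}}(S^u) \ge \overline{p} = \Phi(\varepsilon_f/\sigma_f)$, hence $\Phi^{-1}(\Psi_{f_{norm}}(S^u)) \ge \varepsilon_f/\sigma_f$; subtracting the Lipschitz slack yields $\Phi^{-1}(\Psi_{f_{norm}+u}(S^u)) \ge 0$, i.e. $\Psi_{f_{norm}+u}(S^u) \ge \tfrac12$, so the shifted median satisfies $m_u \le S^u$. Symmetrically, for any $y < S^l$ the definition of the supremum gives $\Psi_{f_{norm}}(y) \le \underline{p} = \Phi(-\varepsilon_f/\sigma_f)$, so $\Phi^{-1}(\Psi_{f_{norm}+u}(y)) \le 0$ and $\Psi_{f_{norm}+u}(y) \le \tfrac12$; letting $y \uparrow S^l$ forces $m_u \ge S^l$. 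Together these give the claimed sandwich.

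The main obstacle is establishing the Lipschitz lemma itself, since everything else is a mechanical substitution into the quantile definitions. This step is where the Gaussian structure is essential: it can be proven either via the Neyman--Pearson lemma (the worst-case shift of the probability of a measurable event under a Gaussian is attained by a half-space, whose smoothed probability is exactly a one-dimensional Gaussian CDF) or by a direct computation bounding $\|\nabla_z \Phi^{-1}(\Psi_z(y))\|_2$. A secondary technical point worth handling carefully is the non-uniqueness of the median when $\Psi_{f_{norm}+u}$ has flat regions or a jump at $1/2$; I would address this by working with the generalized quantile $\inf\{y : \Psi_{f_{norm}+u}(y) \ge \tfrac12\}$, so that the inequalities $\Psi_{f_{norm}+u}(S^u) \ge \tfrac12$ and $\Psi_{f_{norm}+u}(y) \le \tfrac12$ for $y < S^l$ translate directly into the desired bounds on $m_u$.
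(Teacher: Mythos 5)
The paper gives no proof of this theorem --- it is imported verbatim from \cite{chiang2020detection} --- so there is nothing in-paper to compare against; measured against the source's own argument, your proposal is correct and follows essentially the same route: reduce the sandwich to quantile bookkeeping on the smoothed CDF $\Psi_z(y)$, and drive everything with the $1/\sigma_f$-Lipschitzness of $z \mapsto \Phi^{-1}(\Psi_z(y))$ for the $\{0,1\}$-valued indicator $\mathbb{1}[S(\cdot)\le y]$ (the Salman et al.\ / Neyman--Pearson lemma underlying randomized smoothing). Your instantiations at $y=S^u$ and at $y<S^l$ are exactly right, including the use of right-continuity to get $\Psi_{f_{norm}}(S^u)\ge\overline{p}$. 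The one residual subtlety is the degenerate case you partially flag: with a single generalized-quantile convention for the shifted median, the derived inequality $\Psi_{f_{norm}+u}(y)\le\tfrac12$ for $y<S^l$ permits equality, so if the shifted CDF sits exactly at $\tfrac12$ below $S^l$ the inf-form median could dip under $S^l$ (and symmetrically for the sup-form at $S^u$); the cited work handles this by pairing the sup-form quantile with the lower bound and the inf-form with the upper bound, and the conspiracy of equalities needed to break it is a measure-zero edge case, so this does not affect the substance of your argument.
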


\subsection{Training and application}

The parameters of the IQA backbone \( b(\cdot) \) are frozen, while \( FTN(\cdot) \) and \( Scorer(\cdot) \) are trainable. The training objective focuses solely on predicting the quality scores \( S \), since the primary goal of the IQA task is to produce scores that highly correlate with human assessments. Accordingly, we use the mean squared error (MSE) loss between the predicted scores $S$ and the ground-truth subjective quality scores.

Pseudocode for the inference procedure of the FS-IQA model is provided in Algorithm \ref{alg:quality_certification}.

\begin{algorithm}[H]
\caption{\textbf{Pseudocode} for Quality Prediction and Certification for Feature-Smoothed IQA Model $Q$ on $x$}
\label{alg:quality_certification}
\textbf{Input}: Image (or image pair) \(x\), feature noise bound \(\sigma_f\), number of samples \(N\), confidence level \(\alpha\), Jacobian norm threshold \(\tau > 0\) \\
\textbf{Output}: Quality score \(S\), certified input bound \(\epsilon_x\), and lower and upper output bounds \(S^l, S^u\) such that \(\forall \Delta x \colon \|\Delta x\|_2 \le \epsilon_x \Rightarrow S^l \le Q(x+\Delta x) \le S^u\), or \texttt{ABSTAIN} if certification is not reliable
\begin{algorithmic}[1]
\STATE \(f_{\text{init}} \gets IQA_{\text{backbone}}(x)\) \COMMENT{Extract initial features}
\STATE \(f_{\text{norm}} \gets FTN(f_{\text{init}})\) \COMMENT{Apply Feature Transform Normalization}
\STATE \(J \gets \text{Jacobian}\big(FTN \circ IQA_{\text{backbone}}, x\big)\) \COMMENT{Compute Jacobian at \(x\)}
\IF{\(\|J\|_2 < \tau\)} 
    \RETURN \texttt{ABSTAIN} \COMMENT{Jacobian norm too small — certification not reliable}
\ENDIF
\STATE \(\epsilon_x \gets \frac{\sigma_f}{\|J\|_2}\) \COMMENT{Maximal allowed input perturbation}

\STATE Initialize empty list \texttt{scores}
\FOR{\(i = 1 \to N\)}
    \STATE Sample noise vector \(e \sim \mathcal{N}(0, \sigma_f I)\)
    \STATE \(f_{\text{noised}} \gets f_{\text{norm}} + e\) \COMMENT{Add noise in feature space}
    \STATE \(S_{cur} \gets Scorer(f_{\text{noised}})\)
    \STATE Append \(S_{cur}\) to \texttt{scores}
\ENDFOR
\STATE \(S \gets \text{median}(\texttt{scores})\)
\STATE \((S^l, S^u) \gets MS_{\text{Cert}}(\texttt{scores}, \sigma_f, N, \alpha)\)
\RETURN \(S, S^l, S^u, \epsilon_x\)
\end{algorithmic}
\end{algorithm}

Experimental verification of theoretical constraints can be found in the supplementary material.

\section{Experiments}

\subsection{FS-IQA parameters}
The $FTN$ and $Scorer$ modules have a simple architecture consisting of fully connected layers. More details in the supplementary material. In our experiments, we use a sample size of \( N = 2000 \), a confidence level of \( \alpha = 0.999 \), and a Jacobian norm threshold \( \tau = 0.001 \). To benchmark the effectiveness of our FS-IQA approach, we conducted comparisons with five certified defenses on NR and FR IQA models. Experiments were conducted using a GPU server powered by NVIDIA A100 GPUs.

\subsection{NR and FR IQA models}
As NR models we selected \textbf{DBCNN} \cite{zhang2020blind}, \textbf{HyperIQA} \cite{Su_2020_CVPR} and \textbf{KonCept} \cite{hosu2020koniq}. As FR models we selected \textbf{LPIPS} \cite{zhang2018perceptual}, \textbf{DISTS} \cite{ding2020iqa} and \textbf{DreamSim} \cite{fu2023dreamsim}. These models are widely recognized in the field of IQA due to their strong performance and their diverse architectural approaches \cite{NEURIPS2022_59ac9f01}.



\subsection{Compared Methods}
For comparison, we include all known certified defense methods to date, both those created specifically for IQA and those made for general regression tasks that can be adapted to IQA. These methods are: \textbf{RS-Reg} \cite{rekavandi2025rs} and \textbf{Cert-Reg} \cite{rekavandi2024certified}, proposed for classification; \textbf{MS} and \textbf{DMS} \cite{chiang2020detection}, proposed for detection; and \textbf{DMS-IQA} \cite{SHUMITSKAYA2025104447}, designed specifically for IQA. Detailed descriptions of these methods are provided in the Related Work section.

\begin{table*}[h!]
\begin{center}
\begin{tabular}{cccccc}
\hline
& \multicolumn{3}{c}{\textbf{SRCC/PLCC}} & \multicolumn{2}{c}{\textbf{time}} \\
\textbf{Method}  & \makecell{$\sigma=0.1$} & $\sigma=0.25$ & $\sigma=0.5$ & with cert & no cert \\
\hline
RS-Reg & 0.66$\pm$0.02/0.67$\pm$0.04 & 0.38$\pm$0.07/0.38$\pm$0.06 & -0.04$\pm$0.05/-0.02$\pm$0.06 & \underline{4.2$\pm$1.7 sec} & \underline{3.8$\pm$1.7 sec} \\
Cert-Reg & 0.66$\pm$0.02/0.67$\pm$0.04 & 0.38$\pm$0.07/0.38$\pm$0.06 & -0.04$\pm$0.05/-0.02$\pm$0.06 & \underline{4.2$\pm$1.7 sec} & 4.2$\pm$1.7 sec \\
MS & 0.66$\pm$0.02/0.67$\pm$0.04 & 0.38$\pm$0.07/0.38$\pm$0.06 & -0.04$\pm$0.05/-0.02$\pm$0.06 & \underline{4.2$\pm$1.7 sec} & \underline{3.8$\pm$1.7 sec} \\
DMS & 0.84$\pm$0.04/0.87$\pm$0.03 & 0.74$\pm$0.05/0.78$\pm$0.05 & 0.63$\pm$0.04/0.67$\pm$0.05 & 7.9$\pm$1.8 sec & 7.4$\pm$1.7 sec \\
DMS-IQA & \underline{0.86$\pm$0.02/0.88$\pm$0.02} & \underline{0.80$\pm$0.03/0.83$\pm$0.02} & \underline{0.70$\pm$0.03/0.73$\pm$0.02} & 7.9$\pm$1.8 sec & 7.4$\pm$1.7 sec \\
FS-IQA (ours) & \textbf{0.91$\pm$0.01/0.93$\pm$0.01} & \textbf{0.91$\pm$0.01/0.93$\pm$0.01} & \textbf{0.91$\pm$0.02/0.92$\pm$0.01} & \textbf{6.9$\pm$2.3 sec} & \textbf{33$\pm$8 ms} \\
\hline
\end{tabular}
\caption{Comparison of methods on NR IQA models (DBCNN, HyperIQA and KonCept) using the KonIQ-10k dataset. All metrics are averaged across the IQA models. Data are presented as mean $\pm$ standard error of the mean (SEM). Time shows time to process one image with resolution $384\times512$. Detailed results for each IQA model are in the supplementary material. }
\label{tab:comparison_nr}
\end{center}
\end{table*}

\begin{table*}[h!]
\begin{center}
\begin{tabular}{cccccc}
\hline
& \multicolumn{3}{c}{\textbf{SRCC/PLCC}} & \multicolumn{2}{c}{\textbf{time}} \\
 \textbf{Method} & \makecell{$\sigma=0.1$} & $\sigma=0.25$ & $\sigma=0.5$ & with cert & no cert \\
\hline
RS-Reg & 0.54$\pm$0.08/0.54$\pm$0.05 & 0.33$\pm$0.12/0.33$\pm$0.13 & 0.14$\pm$0.11/0.13$\pm$0.13 & \textbf{6.9$\pm$2.9} sec & \underline{6.4$\pm$2.9 sec} \\
Cert-Reg & 0.54$\pm$0.08/0.54$\pm$0.05 & 0.33$\pm$0.12/0.33$\pm$0.13 & 0.14$\pm$0.11/0.13$\pm$0.13 & \textbf{6.9$\pm$2.9} sec & 6.9$\pm$2.9 sec \\
MS & 0.54$\pm$0.08/0.54$\pm$0.05 & 0.33$\pm$0.12/0.33$\pm$0.13 & 0.14$\pm$0.11/0.13$\pm$0.13 & \textbf{6.9$\pm$2.9} sec & \underline{6.4$\pm$2.9 sec} \\
DMS & 0.75$\pm$0.02/0.70$\pm$0.04 & 0.62$\pm$0.01/0.62$\pm$0.02 & \underline{0.50$\pm$0.01/0.53$\pm$0.01} & 10.5$\pm$2.9 sec & 10.0$\pm$2.9 sec \\
DMS-IQA & \underline{0.78$\pm$0.02/0.72$\pm$0.04} & \underline{0.63$\pm$0.01/0.62$\pm$0.02} & 0.49$\pm$0.04/0.51$\pm$0.03 & 10.5$\pm$2.9 sec & 10.0$\pm$2.9 sec \\
FS-IQA (ours) & \textbf{0.89$\pm$0.01/0.89$\pm$0.02} & \textbf{0.90$\pm$0.01/0.89$\pm$0.02} & \textbf{0.89$\pm$0.01/0.89$\pm$0.01} & \underline{7.5$\pm$3.6 sec} & \textbf{40$\pm$14 ms} \\
\hline
\end{tabular}
\caption{Comparison of methods on FR IQA models (LPIPS, DISTS and DreamSim) using the Kadid-10k dataset. All metrics are averaged across the IQA models. Data are presented as mean $\pm$ standard error of the mean (SEM). Time shows time to process one image with resolution $384\times512$. Detailed results for each IQA model are in the supplementary material. }
\label{tab:comparison_fr}
\end{center}
\end{table*}

\subsection{Datasets}
For experiments on NR IQA models, we use the \textbf{KonIQ-10k} image dataset \cite{hosu2020koniq}, consisting of 10,073 images. For FR models, we use the \textbf{Kadid-10k} database \cite{lin2019kadid}, consisting of 10,125 images. These datasets are commonly used for designing IQA models. All images have a resolution of $384 \times 512$ pixels. Both datasets were split into training and testing subsets, using an 80\% / 20\% ratio. For defense methods that require a training stage (DMS, DMS-IQA, and the proposed FS-IQA), the training subset was used. All methods were then evaluated on the testing subset using a sample size of \( N=2000 \) and a confidence level of \( \alpha = 0.999 \). Other parameters are detailed in the supplementary material.

\subsection{Evaluation metrics}

As evaluation metrics for the compared methods, we primarily use Spearman’s rank correlation coefficient (\textbf{SRCC}) and Pearson’s linear correlation coefficient (\textbf{PLCC}) with human-assessed subjective scores, since the main goal of IQA is to accurately estimate image quality. Additionally, for comparison, we measure the computational \textbf{time} required to process one image, both with and without certification. The runtime was averaged over 100 runs.

\begin{figure*}[htb]
\begin{center}
\centerline{\includegraphics[width=0.88\linewidth]{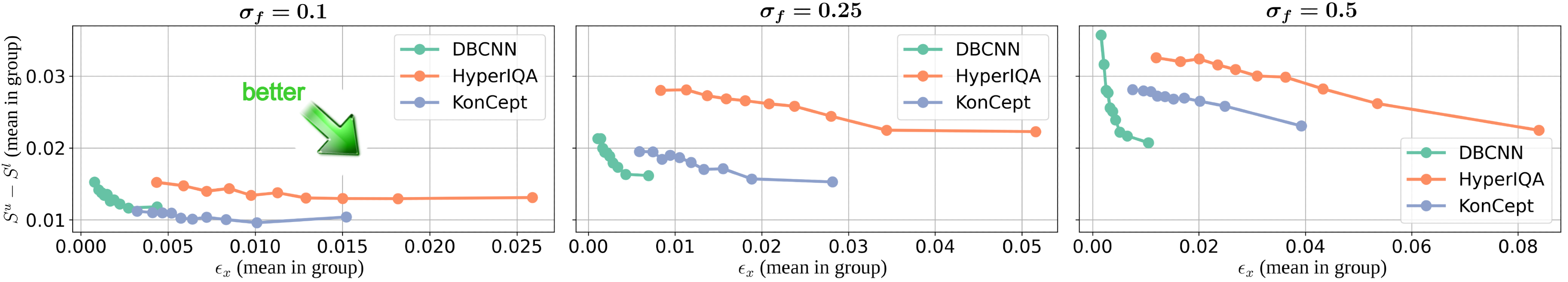}}
\caption{Average difference between upper and lower bounds $(S^u - S^l)$ versus input perturbation \(l_2\) norm constraint \(\epsilon_x\) for three NR IQA models (DBCNN, HyperIQA, KonCept) at noise scales \(\sigma \in \{0.1, 0.25, 0.5\}\), produced using FS-IQA. Data are grouped into 10 quantile-based bins by \(\epsilon_x\), with mean values plotted for each group. }
\label{fig:FScert}
\end{center}
\end{figure*}

\section{Results}

Tables \ref{tab:comparison_nr} and \ref{tab:comparison_fr} present the comparison results of FS-IQA with prior certified defenses on NR IQA and FR IQA models, respectively. For RS-Reg, Cert-Reg, and MS methods, no significant differences were observed. This suggests that mean, \(\alpha\)-trimmed mean, and median data reduction techniques yield similar performance for IQA models. Since IQA models are typically trained on noisy data and are robust to noise outliers, developing complex methods specifically for outlier handling is unnecessary. However, these three methods exhibit poor SRCC and PLCC scores, which dramatically decrease as \(\sigma\) increases. DMS and DMS-IQA methods show significantly better SRCC and PLCC results, though at the cost of increased computation time due to the denoising step. Finally, the proposed FS-IQA method demonstrates the best performance in terms of SRCC and PLCC scores, with only minor decreases as \(\sigma\) rises. This indicates that FS-IQA leverages more semantic features rather than relying solely on input images, making the defense pipeline adaptable and effective even under high noise levels. On average, FS-IQA improves SRCC by approximately \textbf{31.3\%} and PLCC by \textbf{30.5\%} compared to the best existing DMS-IQA method across the three \(\sigma_f\) values.

Regarding computational time, FS-IQA with certification mode is slightly slower than classic RS-Reg, Cert-Reg, and MS methods, but approximately \textbf{20.6\%} faster than the state-of-the-art DMS and DMS-IQA approaches. We also evaluated the methods in a non-certification mode — i.e., when only the certified quality score is computed without calculating restrictions. This mode is useful for real-time applications that use a reliable IQA model proven robust on specific data types, without spending extra time checking restrictions for each item. As observed, all prior certified methods struggle to deliver fast performance in this setting because they require running the IQA backbone multiple times. In contrast, our approach shifts the smoothing operation to the feature space, overcoming this limitation. This results in a dramatic speed advantage: 33 milliseconds versus the second-best time of 7.4 seconds. The difference is significant, highlighting FS-IQA’s practical benefits for real-time applications.

Figure \ref{fig:FScert} presents a visualization of the certified guarantees provided by FS-IQA for NR IQA models at different noise levels \(\sigma_f\). The graph shows the average difference between the upper and lower bounds versus \(\epsilon_x\), grouping the data into 10 quantile-based bins according to \(\epsilon_x\), with mean values plotted for each group.  As observed, increasing \(\sigma_f\) results in wider bounds, meaning that by adjusting the \(\sigma_f\) level, we can control the strength of the certification guarantees. Additionally, as shown in Figure \ref{fig:FScert}, for IQA models it is preferable to be located in the bottom-right corner of the plot, which indicates that the model’s output changes very little within a large neighborhood around the input — thus providing stronger guarantees. This visualization allows us to compare different IQA backbones and select the best option. For example, the KonCept line consistently lies below the HyperIQA line across all \(\sigma_f\) levels, indicating that for the same \(\epsilon_x\) values, KonCept imposes tighter restrictions on the model output. Therefore, we can conclude that KonCept, combined with FS-IQA, provides stronger certified guarantees than HyperIQA.


\section{Discussion}

\subsection{Beyond Certified Guarantees}
It is a popular topic of discussion that today's certified defenses are not practically useful because they provide tight certified guarantees at the cost of increased computation time. Additionally, questions arise about how these methods can defend against much larger perturbations. To explore this issue, we conducted additional experiments to evaluate the empirical robustness of our approach against perturbations with amplitudes much more greater than the certified robustness guarantee. Specifically, we applied I-FGSM (10 iterations) \cite{kurakin2017adversarial} to generate adversarial perturbations on 100 images from the KonIQ dataset using the KonCept NR model, testing multiple perturbation norms: \(\{0.02, 0.05, 0.1, 0.15, 0.20, 0.25\}\). We then measured the performance degradation of both the original KonCept model and the FS-IQA defended KonCept model under these stronger attacks (see Figure \ref{fig:beyond_guarantees}). FS-IQA’s scores decreased, matching human perception, while the undefended KonCept’s scores grew by up to 60\%. This shows FS-IQA stays robust even beyond its certified guarantees.

\begin{figure}[htb]
\begin{center}
\centerline{\includegraphics[width=0.8\linewidth]{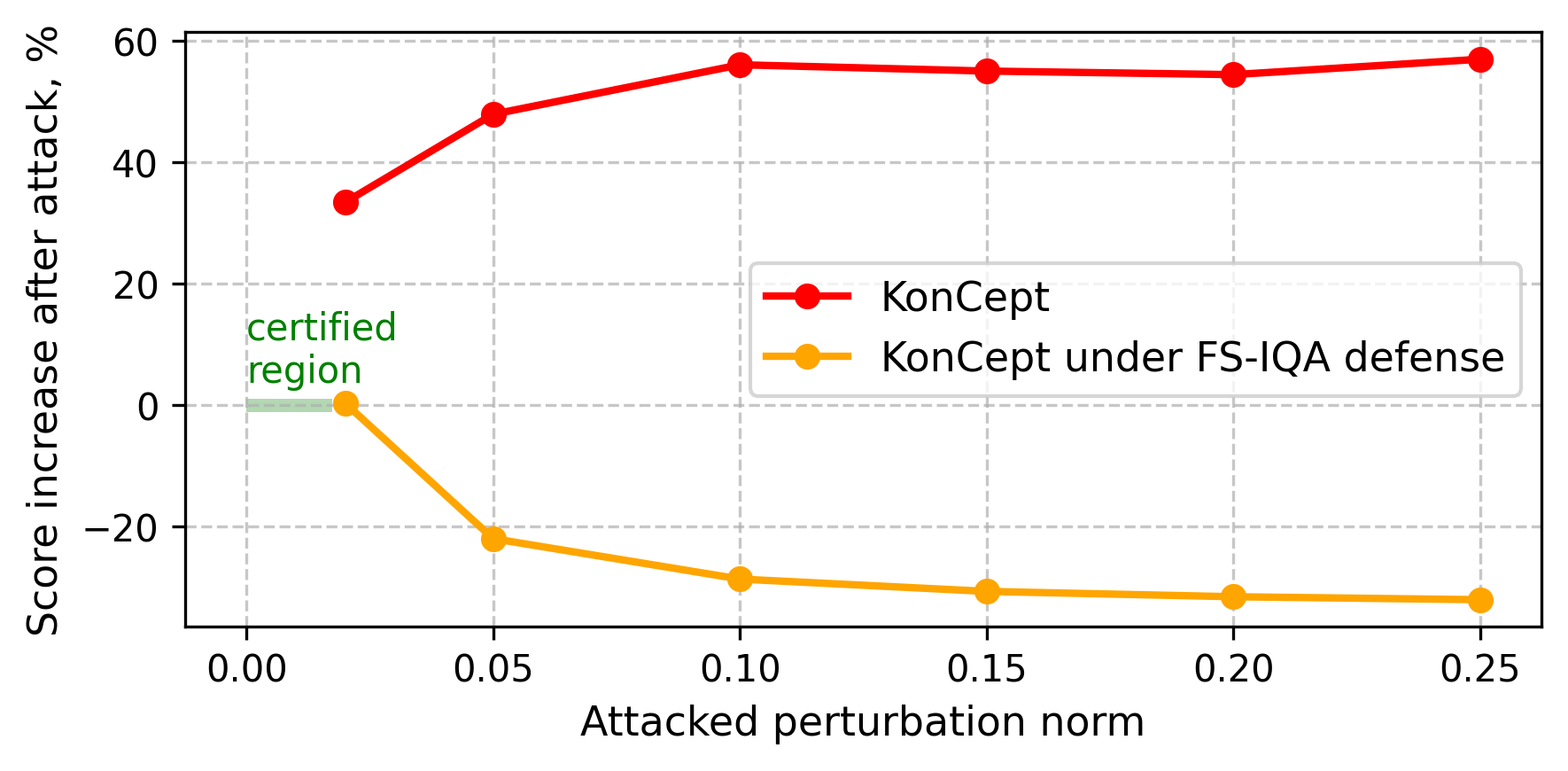}}
\caption{Evaluation of FS-IQA robustness beyond the certified region against adversarial perturbations generated by I-FGSM (10 iterations). }
\label{fig:beyond_guarantees}
\end{center}
\end{figure}

\subsection{Output stability}
FS-IQA is a stochastic method; however, we additionally demonstrated that running the model multiple times on the same image results in only minor variations in the output. We measured the deviation of the model’s output on a single image across multiple runs and found that the change in output is only 0.06\%. Details in the supplementary material.

\subsection{Limitations}
One main limitation of the FS-IQA approach is its high computational complexity in certification mode. Currently, the processing time is not suitable for real-time applications. However, with increased computational resources, this may become more practical. Therefore, we suggest using the FS-IQA method in real-time applications without the certification mode, combined with pre-testing on a specific data type using certification to estimate robustness, since similar data typically exhibit similar certified properties. As demonstrated in our experiments, this mode is fast—comparable to the speed of standard IQA models, while providing good empirical robustness. It is worth noting that previous certified methods without certification mode are still relatively slow, making our method more suitable for real-time use.

The second limitation is the number of abstentions. When the certification is not reliable, the method outputs an abstain decision. In all our experiments, the percentage of abstains was 1.5\%. Although this rate is not very high, it depends on the IQA backbone and the specific data. Therefore, in practice, users can select a suitable IQA backbone for the given data to minimize the number of abstains.

\section{Conclusion}
This paper introduces a novel certified defense method for IQA models based on randomized smoothing with noise applied in the feature space instead of the input space. Through extensive experiments on two benchmark datasets with six popular FR and NR IQA models, we showed that in contrast to prior approaches, our method achieves 30.9\% better correlation with subjective scores. Importantly, FS-IQA offers a significant speed advantage, especially in non-certification mode, boosting efficiency by 99.5\%, making it practical for real-time applications where certified defenses have historically struggled due to resource-intensive computations. Beyond theoretical guarantees, FS-IQA exhibited strong empirical robustness against perturbations significantly larger than the certified bounds, addressing common criticisms of certified defenses. Overall, FS-IQA presents a promising direction for robust and efficient certified defenses in IQA, balancing theoretical guarantees with practical usability. Code is publicly available at \textit{link is hidden for a blind review}.

\bibliography{aaai2026}

\begin{thebibliography}{27}
\providecommand{\natexlab}[1]{#1}

\bibitem[{Addepalli et~al.(2021)Addepalli, Jain, Sriramanan, and Babu}]{addepalli2021boosting}
Addepalli, S.; Jain, S.; Sriramanan, G.; and Babu, R.~V. 2021.
\newblock Boosting adversarial robustness using feature level stochastic smoothing.
\newblock In \emph{Proceedings of the IEEE/CVF Conference on Computer Vision and Pattern Recognition}, 93--102.

\bibitem[{Antsiferova et~al.(2024)Antsiferova, Abud, Gushchin, Shumitskaya, Lavrushkin, and Vatolin}]{antsiferova2024comparing}
Antsiferova, A.; Abud, K.; Gushchin, A.; Shumitskaya, E.; Lavrushkin, S.; and Vatolin, D. 2024.
\newblock Comparing the robustness of modern no-reference image-and video-quality metrics to adversarial attacks.
\newblock In \emph{Proceedings of the AAAI Conference on Artificial Intelligence}, volume~38, 700--708.

\bibitem[{Antsiferova et~al.(2022)Antsiferova, Lavrushkin, Smirnov, Gushchin, Vatolin, and Kulikov}]{NEURIPS2022_59ac9f01}
Antsiferova, A.; Lavrushkin, S.; Smirnov, M.; Gushchin, A.; Vatolin, D.; and Kulikov, D. 2022.
\newblock Video compression dataset and benchmark of learning-based video-quality metrics.
\newblock In Koyejo, S.; Mohamed, S.; Agarwal, A.; Belgrave, D.; Cho, K.; and Oh, A., eds., \emph{Advances in Neural Information Processing Systems}, volume~35, 13814--13825. Curran Associates, Inc.

\bibitem[{Chiang et~al.(2020)Chiang, Curry, Abdelkader, Kumar, Dickerson, and Goldstein}]{chiang2020detection}
Chiang, P.-y.; Curry, M.; Abdelkader, A.; Kumar, A.; Dickerson, J.; and Goldstein, T. 2020.
\newblock Detection as regression: Certified object detection with median smoothing.
\newblock \emph{Advances in Neural Information Processing Systems}, 33: 1275--1286.

\bibitem[{Chistyakova et~al.(2024)Chistyakova, Antsiferova, Khrebtov, Lavrushkin, Arkhipenko, Vatolin, and Turdakov}]{chistyakova2024increasing}
Chistyakova, A.; Antsiferova, A.; Khrebtov, M.; Lavrushkin, S.; Arkhipenko, K.; Vatolin, D.; and Turdakov, D. 2024.
\newblock Increasing the robustness of image quality assessment models through adversarial training.
\newblock \emph{Technologies}, 12(11): 220.

\bibitem[{Cohen, Rosenfeld, and Kolter(2019)}]{cohen2019certified}
Cohen, J.; Rosenfeld, E.; and Kolter, Z. 2019.
\newblock Certified adversarial robustness via randomized smoothing.
\newblock In \emph{international conference on machine learning}, 1310--1320. PMLR.

\bibitem[{Deng et~al.(2024)Deng, Yang, Huang, Liu, Gui, and Luo}]{deng2024sparse}
Deng, W.; Yang, C.; Huang, K.; Liu, Y.; Gui, W.; and Luo, J. 2024.
\newblock Sparse adversarial video attack based on dual-branch neural network on industrial artificial intelligence of things.
\newblock \emph{IEEE Transactions on Industrial Informatics}, 20(7): 9385--9392.

\bibitem[{Ding et~al.(2020)Ding, Ma, Wang, and Simoncelli}]{ding2020iqa}
Ding, K.; Ma, K.; Wang, S.; and Simoncelli, E.~P. 2020.
\newblock Image Quality Assessment: Unifying Structure and Texture Similarity.
\newblock \emph{CoRR}, abs/2004.07728.

\bibitem[{Fu et~al.(2023)Fu, Tamir, Sundaram, Chai, Zhang, Dekel, and Isola}]{fu2023dreamsim}
Fu, S.; Tamir, N.; Sundaram, S.; Chai, L.; Zhang, R.; Dekel, T.; and Isola, P. 2023.
\newblock DreamSim: Learning New Dimensions of Human Visual Similarity using Synthetic Data.
\newblock \emph{Advances in Neural Information Processing Systems}, 36: 50742--50768.

\bibitem[{Ghazanfari et~al.(2023)Ghazanfari, Araujo, Krishnamurthy, Khorrami, and Garg}]{ghazanfari2023lipsim}
Ghazanfari, S.; Araujo, A.; Krishnamurthy, P.; Khorrami, F.; and Garg, S. 2023.
\newblock Lipsim: A provably robust perceptual similarity metric.
\newblock \emph{arXiv preprint arXiv:2310.18274}.

\bibitem[{Gotin et~al.(2025)Gotin, Shumitskaya, Antsiferova, and Vatolin}]{visapp25}
Gotin, G.; Shumitskaya, E.; Antsiferova, A.; and Vatolin, D. 2025.
\newblock Cross-Modal Transferable Image-to-Video Attack on Video Quality Metrics.
\newblock In \emph{Proceedings of the 20th International Joint Conference on Computer Vision, Imaging and Computer Graphics Theory and Applications - Volume 3: VISAPP}, 880--888. INSTICC, SciTePress.
\newblock ISBN 978-989-758-728-3.

\bibitem[{Gushchin et~al.(2024)Gushchin, Abud, Bychkov, Shumitskaya, Chistyakova, Lavrushkin, Rasheed, Malyshev, Vatolin, and Antsiferova}]{gushchin2024guardians}
Gushchin, A.; Abud, K.; Bychkov, G.; Shumitskaya, E.; Chistyakova, A.; Lavrushkin, S.; Rasheed, B.; Malyshev, K.; Vatolin, D.; and Antsiferova, A. 2024.
\newblock Guardians of image quality: Benchmarking defenses against adversarial attacks on image quality metrics.
\newblock \emph{arXiv preprint arXiv:2408.01541}.

\bibitem[{Hosu et~al.(2020)Hosu, Lin, Sziranyi, and Saupe}]{hosu2020koniq}
Hosu, V.; Lin, H.; Sziranyi, T.; and Saupe, D. 2020.
\newblock KonIQ-10k: An ecologically valid database for deep learning of blind image quality assessment.
\newblock \emph{IEEE Transactions on Image Processing}, 29: 4041--4056.

\bibitem[{Kurakin, Goodfellow, and Bengio(2017)}]{kurakin2017adversarial}
Kurakin, A.; Goodfellow, I.~J.; and Bengio, S. 2017.
\newblock Adversarial examples in the physical world.
\newblock In \emph{Proceedings of the 5th International Conference on Learning Representations (ICLR)}.

\bibitem[{Lin, Hosu, and Saupe(2019)}]{lin2019kadid}
Lin, H.; Hosu, V.; and Saupe, D. 2019.
\newblock KADID-10k: A large-scale artificially distorted IQA database.
\newblock In \emph{2019 Eleventh International Conference on Quality of Multimedia Experience (QoMEX)}, 1--3. IEEE.

\bibitem[{Liu et~al.(2024)Liu, Yang, Li, Ding, and Jiang}]{liu2024defense}
Liu, Y.; Yang, C.; Li, D.; Ding, J.; and Jiang, T. 2024.
\newblock Defense against adversarial attacks on no-reference image quality models with gradient norm regularization.
\newblock In \emph{Proceedings of the IEEE/CVF Conference on Computer Vision and Pattern Recognition}, 25554--25563.

\bibitem[{Liu et~al.(2025)Liu, Yang, Yu, and Huang}]{liu2025enhancing}
Liu, Y.; Yang, C.; Yu, Z.; and Huang, T. 2025.
\newblock Enhancing NR-IQA Model Robustness through Simple Image Compression Techniques.
\newblock \emph{IEEE Transactions on Circuits and Systems for Video Technology}.

\bibitem[{Ma, Dong, and Xu(2023)}]{ma2023adversarial}
Ma, Y.; Dong, M.; and Xu, C. 2023.
\newblock Adversarial robustness through random weight sampling.
\newblock \emph{Advances in Neural Information Processing Systems}, 36: 37657--37669.

\bibitem[{Miri~Rekavandi et~al.(2024)Miri~Rekavandi, Farokhi, Ohrimenko, and Rubinstein}]{rekavandi2024certified}
Miri~Rekavandi, A.; Farokhi, F.; Ohrimenko, O.; and Rubinstein, B.~I. 2024.
\newblock Certified Adversarial Robustness via Randomized $\alpha$-Smoothing for Regression Models.
\newblock \emph{Advances in Neural Information Processing Systems}.

\bibitem[{Rekavandi, Ohrimenko, and Rubinstein(2025)}]{rekavandi2025rs}
Rekavandi, A.~M.; Ohrimenko, O.; and Rubinstein, B.~I. 2025.
\newblock RS-Reg: Probabilistic and Robust Certified Regression Through Randomized Smoothing.
\newblock \emph{TMLR}.

\bibitem[{Shumitskaya et~al.(2025)Shumitskaya, Pautov, Vatolin, and Antsiferova}]{SHUMITSKAYA2025104447}
Shumitskaya, E.; Pautov, M.; Vatolin, D.; and Antsiferova, A. 2025.
\newblock Stochastic BIQA: Median randomized smoothing for certified blind image quality assessment.
\newblock \emph{Computer Vision and Image Understanding}, 104447.

\bibitem[{Su et~al.(2020)Su, Yan, Zhu, Zhang, Ge, Sun, and Zhang}]{Su_2020_CVPR}
Su, S.; Yan, Q.; Zhu, Y.; Zhang, C.; Ge, X.; Sun, J.; and Zhang, Y. 2020.
\newblock Blindly Assess Image Quality in the Wild Guided by a Self-Adaptive Hyper Network.
\newblock In \emph{IEEE/CVF Conference on Computer Vision and Pattern Recognition (CVPR)}.

\bibitem[{Yang et~al.(2024)Yang, Liu, Li, and Jiang}]{yang2024exploring}
Yang, C.; Liu, Y.; Li, D.; and Jiang, T. 2024.
\newblock Exploring vulnerabilities of no-reference image quality assessment models: A query-based black-box method.
\newblock \emph{IEEE Transactions on Circuits and Systems for Video Technology}.

\bibitem[{Yu et~al.(2025)Yu, Xia, Lin, Yang, Lu, Tan, and Kot}]{yu2025backdoor}
Yu, Y.; Xia, S.; Lin, X.; Yang, W.; Lu, S.; Tan, Y.-P.; and Kot, A. 2025.
\newblock Backdoor attacks against no-reference image quality assessment models via a scalable trigger.
\newblock In \emph{Proceedings of the AAAI Conference on Artificial Intelligence}, volume~39, 9698--9706.

\bibitem[{Zhang et~al.(2018)Zhang, Isola, Efros, Shechtman, and Wang}]{zhang2018perceptual}
Zhang, R.; Isola, P.; Efros, A.~A.; Shechtman, E.; and Wang, O. 2018.
\newblock The Unreasonable Effectiveness of Deep Features as a Perceptual Metric.
\newblock In \emph{CVPR}.

\bibitem[{Zhang et~al.(2022)Zhang, Li, Min, Zhai, Guo, Yang, and Ma}]{zhang2022perceptual}
Zhang, W.; Li, D.; Min, X.; Zhai, G.; Guo, G.; Yang, X.; and Ma, K. 2022.
\newblock Perceptual attacks of no-reference image quality models with human-in-the-loop.
\newblock \emph{Advances in Neural Information Processing Systems}, 35: 2916--2929.

\bibitem[{Zhang et~al.(2020)Zhang, Ma, Yan, Deng, and Wang}]{zhang2020blind}
Zhang, W.; Ma, K.; Yan, J.; Deng, D.; and Wang, Z. 2020.
\newblock Blind Image Quality Assessment Using A Deep Bilinear Convolutional Neural Network.
\newblock \emph{IEEE Transactions on Circuits and Systems for Video Technology}, 30(1): 36--47.

\end{thebibliography}


\end{document}